\newtheorem{theorem}{Theorem}
\newtheorem{assumption}[theorem]{Assumption}
\newtheorem{definition}[theorem]{Definition}
\newtheorem{lemma}[theorem]{Lemma}
\newtheorem{remark}[theorem]{Remark}
\newtheorem{example}[theorem]{Example}
\newcolumntype{P}[1]{>{\centering\arraybackslash}p{#1}}
\begin{document}
\title{Particle Clustering Machine: A Dynamical System Based Approach}
\author[1]{Sambarta Dasgupta\thanks{dasgupta.sambarta@gmail.com}}
\author[2]{Keivan Ebrahimi\thanks{keivan@iastate.edu}}
\author[2]{Umesh Vaidya\thanks{ugvaidya@iastate.edu}}
\affil[1]{Monsanto Company}
\affil[2]{ECPE Department, Iowa State University}
\maketitle
\begin{abstract} 
Identification of the clusters from an unlabeled data set is one of the most important problems in Unsupervised Machine Learning. The state of the art clustering algorithms are based on either the statistical properties or the geometric properties of the data set. In this work, we propose a novel method to cluster the data points using dynamical systems theory. After constructing a gradient dynamical system using interaction potential, we prove that the asymptotic dynamics of this system will determine the cluster centers, when the dynamical system is initialized at the data points. Most of the existing heuristic-based clustering techniques suffer from a disadvantage, namely the stochastic nature of the solution. Whereas, the proposed algorithm is deterministic, and the outcome would not change over multiple runs of the proposed algorithm with the same input data. Another advantage of the proposed method is that the number of clusters, which is difficult to determine in practice, does not have to be specified in advance. Simulation results with are presented, and comparisons are made with the existing methods. 
\end{abstract}
\section{Introduction}
Clustering is one of the most important and well studied problems in Unsupervised Machine Learning \cite{bishop2006pattern, murphy2012machine}, and Statistics \cite{arabie1996clustering}. Any clustering algorithm would take a set of unlabeled data points as inputs. The distance between the data points could be computed using some distance metric. The task of the clustering algorithm would be to group the data points in a manner such that the distances between data points within the same cluster is minimized, and the distances between data points belonging to different clusters are maximized.
There are several existing popular methods for clustering e.g. Hierarchal Clustering, Bayesian Clustering, K-means, C-means, Spectral Clustering, Mean-shift and so on. Some of these algorithms are iterative heuristics e.g. k-means, c-means. Mean Shift Clustering is based on feature space analysis to identify the minima of the density function \cite{fukunaga1975estimation, cheng1995mean}. Whereas, Spectral Clustering type algorithms \cite{ng2002spectral} try to analyze the algebraic properties of the distance matrix to cluster the data points. Application of cluster algorithms extends to various areas of unsupervised machine learning e.g. text mining natural language processing, computer vision, bio-informatics and so on \cite{wang2005introduction, manning2008introduction}. K-means algorithm is computationally less expensive, but suffers from the stochastic nature, as cluster centers changes in different runs. Bayesian Clustering utilizes Bayesian hierarchical structure to identify clusters, which is a model based method, and could potentially suffer if the model assumptions do not remain valid for a specific data set. Spectral Clustering involves eigen analysis of Laplacian matrix, and identifies number of data clusters from eigen gap. However, computationally Spectral Clustering would become cumbersome for large data sets. Also, Spectral Clustering uses k-means over the eigen representation of the data set after the dimensionality reduction, and thus also suffers from the stochastic nature of k-means.  \\
In this work, we propose a novel approach based on the Dynamical System Theory for solving the clustering problem. We construct a multi-agent gradient dynamical system with an interaction potential. We prove that the asymptotic dynamics of the multi-agent gradient dynamical system will help to identify the clusters, when the dynamical system is initialized at the location of data points. In particular, the cluster centers correspond to the fixed points of the gradient dynamical system. The key idea is that the data points are viewed as particles, and can exert forces on each other provided they are located sufficiently close to one another. This local interaction force field would steer the data particles to converge towards the appropriate cluster centers. The advantage of the proposed method is that the dynamics would allow to find the clusters without constructing a global potential or analyzing the algebraic properties of the metrics. The proposed approach for clustering draws some analogy with tools and methods from statistical physics \cite{chandler1987introduction}. In particular, the parameter determining of the neighborhood used in the construction of interaction potential is analogous to the temperature parameter in statistical physics. With an increase in system temperature, the entropy of the system increases leading to long range interaction among particles and this is analogous to the increasing size of the neighborhood in the interaction potential for multi-agent gradient system. Application of ideas from statistical physics to clustering is not new \cite{rose1998deterministic}. However, the approach proposed in this paper is different as the results are derived based on theory of dynamical systems. In \cite{dasgupta2017dynamical} a framework has been developed for supervised machine learning using dynamical systems theory. In this paper, we have developed a dynamical systems theory based famework for unsupervised problems. Our work also would be of interest to the researchers from Swarm Intelligence and Evolutionary Computing communities, who have been developing multi-agent heuristic distributed search, and optimization algorithms e.g. Genetic Algorithm (inspired from Darwinian natural selection) \cite{goldberg1988genetic}, Particle Swarm Optimizer (modeling social dynamics) \cite{eberhart1995new} and so on. \\
There are several advantages of the proposed method which are also the main contributions of this paper. The proposed algorithm is deterministic, and does not suffer from stochasticities involved in the meta-heuristics, for example K-means, or Spectral Clustering. In our proposed method, we do not require to carry out spectral analysis of the data sets (such as in Spectral Clustering), which would become cumbersome and computationally expensive for large data sets. The algorithm automatically determines number of clusters, which is formed by the number of disjoint domain of attraction basins of the particles state space. This is a major contribution considering the fact that determining number of clusters itself is a very challenging problem \cite{sugar2011finding,yan2007determining}. \\
The rest of the paper is structured as follows: Section \ref{sec_prob} describes the dynamical system based framework for clustering. The algorithm is described in \ref{sec_algo}, and subsequently we describe the convergence conditions in section \ref{sec_stability}. The method is extended to graph clustering in section \ref{sec_DPGC}. Simulation results are presented in \ref{sec_sim} and finally, conclusions are drawn in Section \ref{sec_con}.
\section{Dynamical system for clustering} \label{sec_prob}
Let us begin by considering a scenario where the unlabeled data set, which needs to be clustered, consists of $N$ data elements. An individual data point is denoted as $z_i \in \mathbb{R}^M, ~ i = 1, \dots N$, where $M$ is the dimension of the feature space. The objective is to construct a multi-agent dynamical system, which can recover the clusters of the data points, i.e., $\{z_1,\ldots,z_N\}$. In particular, the cluster centers for the data points can be identified from the steady state dynamics of the system. Let the multi-agent dynamical system be of the form,
\begin{eqnarray}
\dot x_k=f_k(x_1,\ldots,x_N),\;\;k=1,\ldots,N\label{dynamics}
\end{eqnarray}
with $x_k\in \mathbb{R}^M$. We will show that the steady state dynamics of this system (\ref{dynamics}) would converge to the centers of the appropriate clusters, when the system is initialized at the data points i.e., $x_i(0)=z_i$ for $i=1,\ldots,N$. The vector field $f_k$ will be constructed such that the stable fixed points of the dynamical system (\ref{dynamics}) will correspond to the center of the clusters where the points belonging to a particular cluster converge to the corresponding stable fixed points of the system. The objective is to design the dynamical system vector field i.e., $f_k(x)$ for $k=1,\ldots,N$. Towards this goal, we define a potential function as follows. 
\begin{definition}\label{def_pot}
The potential function, $U:\mathbb{R}^+\geq 0\to \mathbb{R}$ satisfies $U(r)\leq 0$, with minimum at the origin and $U(r)=0$ for $r\geq r_*$, and $\frac{d U}{dr}\geq 0$. With $r=\parallel x_i-x_j\parallel$, we further assume that the potential function $U$ has the property that, 
\[\frac{\partial U}{\partial x_i}=\varphi(r)\left(x_i-x_j\right)\]
where $\varphi$ again satisfies,
\begin{itemize}
\item $0 \leq \varphi(r)\leq 1$.
\item $\varphi(r)=0$ for $r\geq r_*$.
\item $1 > \varphi(r)> 0$,  ~ ~ $0 < r < r_*$.
\item $\frac{d \varphi}{d r} < 0 , ~~ 0 < r < r_*$. 
\end{itemize}
\end{definition}
Following are the examples of the potential function satisfying the above definition.
\begin{example}
\begin{eqnarray}
U(r)=\left\{\begin{array}{ccl}-\frac{(r^2-r_*^2)^2}{r_*^4}&{\rm if}& r\leq r_*\\
0&{\rm for}&r\geq r_*\\ 
\end{array}\right..
\end{eqnarray}
\end{example}
\begin{example}\label{example_gaussian}
\begin{eqnarray} \label{gaussian_potential}
U(r)=\left\{\begin{array}{ccl}-\exp^{-\left(\frac{r^2}{\sigma^2}\right)}&{\rm if}& r\leq r_*\\
0&{\rm for}&r\geq r_*\\ 
\end{array}\right..
\end{eqnarray}
\end{example}
\begin{remark}
The potential function, described in \eqref{gaussian_potential} changes its shape with modulation of $\sigma$. The potential function disappears after the limits of $3\sigma$; hence $r^*$ in this case can be seen as not other than $3\sigma$.
\end{remark}
The potential function in Definition \ref{def_pot} can be used to define interaction potential between particle $x_i$ and $x_j$ as follows
\begin{eqnarray}
\phi_{ij}(x_i,x_j)=U(\parallel x_i-x_j\parallel).
\end{eqnarray}
The total potential is then constructed using the individual interaction potential as follows:
\begin{eqnarray}
\Phi(x)=\sum_{i,j=1}^N\phi_{ij}(x_i,x_j).
\end{eqnarray}
The multi-agent dynamical system is then constructed using the  total potential function as following,
\begin{eqnarray}
\dot x_k&=&-\frac{\partial \Phi}{\partial x_k},\;\;k=1,\ldots, N \nonumber\\
&=&-\sum_{\ell=1}^N \frac{\partial \phi_{k\ell}}{\partial x_k}=\sum_{\ell=1}^N \varphi\left(\parallel x_k-x_\ell \parallel\right) (x_\ell -x_k)\label{gradient}.
\end{eqnarray}
Let $x=(x_1^\top,\ldots,x_N^\top)^\top$, then the vector dynamical system is described as
\begin{eqnarray} \label{vec_field}
\dot x=-\frac{\partial \Phi}{\partial x}.
\end{eqnarray}
Using the property of potential function in Definition \ref{def_pot}, the above dynamical system can be written as 
\begin{eqnarray} \label{sys_dyn}
\dot x=A(x)x.
\end{eqnarray}
The structure of $A(x)\in \mathbb{R}^{L\times L}$, with matrix $L=NM$ is as follows \[ A(x) :=- L(x) ~\otimes~ \hat {\mathbf{1}}, \]
where, $\otimes$ is the kronecker product, and $ \hat {\mathbf{1}}$ is the unit vector, and,
\begin{align*}
l_{ij}(x) &:= \varphi(\parallel x_i - x_j \parallel),~~~~~~~ i \neq j, \\
&:= -\sum_{k \neq i} \varphi (\parallel x_i - x_j \parallel), ~~i = j.
\end{align*}
The matrix $L(x)$ has Laplacian structure, as $L(x) \hat {\mathbf{1}} = 0 $ and $L(x) \ge 0$. The fixed points of the system $\dot x = - L(x) ~\otimes~ \hat 1 x$ would be the cluster centers. Also, it can be noted that $L\left( x \right )|_{x_i=z_i} $ is the Laplacian matrix, which is used in Spectral Clustering algorithm to identify the clusters. It is achieved by doing the eigen analysis of the Laplacian matrix with the state vector $x$ initialized at the initial data points i.e. $L\left( x \right )|_{x_i=z_i}$.\\
\begin{figure}
\centering
\includegraphics[width=0.55 \textwidth]{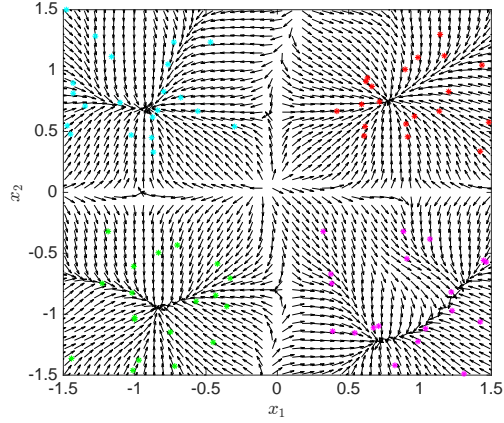}
\caption{ Set of initial data points alongside the velocity field, described by \eqref{vec_field} .}
\label{fig_vector_field}
\end{figure}
\begin{figure}
\centering
\includegraphics[width=0.55 \textwidth]{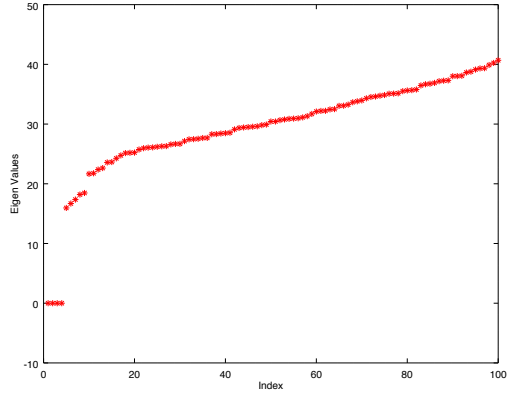}
\caption{ The eigen values of the Laplacian matrix $L\left ( x(0) \right ) $.}
\label{fig_eigen_values}
\end{figure}
\begin{figure}
\centering
\includegraphics[width=0.55 \textwidth]{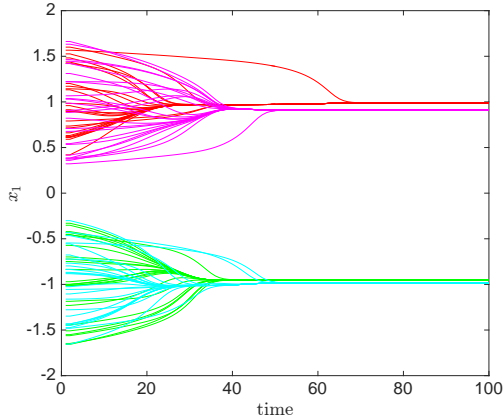}
\caption{ The evolution of $x_1$ dynamics of a $100$ particle system, comprised of $4$ clusters}
\label{fig_time_series_x1}
\end{figure}
Next, we carry out the eigen analysis of the Laplacian matrix for a synthetic data set, and also use the proposed method to cluster the same data set utilizing the dynamical evolution of the particles. It leads us to the conclusion that the both approaches in this example would lead to the identification of the same clusters. Fig. \ref{fig_vector_field} shows the initial position of data points and four clusters marked with four different colors. The Laplacian matrix $L(x)$ is constructed (taking exponential potential kernel as described in \eqref{gaussian_potential}) and the eigen values are plotted in Fig. \ref{fig_eigen_values}. From the theory of Spectral Clustering \cite{maimon2005data}, we know that the number of clusters can be identified as the indices of the eigen value where the eigen gap is maximized. As it can be obsereved from Fig. \ref{fig_eigen_values}, the eigen gap is maximized with four clusters. From the theory of spectral clustering, by clustering the dominant eigen modes using k-means, the clusters can be identified. In Spectral Clustering algorithm, these four dominant eigen modes would further be clustered, using K-means algorithm, to  identify the clusters \cite{maimon2005data}. The eigen decomposition of the Laplacian matrix would become a cumbersome task, if the number of data points are huge. Also, the K-means clustering would introduce stochastic nature to the outcome of the clustering process. Another relevant clustering algorithm, which is similar to the proposed method is the mean shift algorithm \cite{fukunaga1975estimation, cheng1995mean}. Mean shift algorithm is an iterative mode-seeking model. The algorithm tries to estimate the clusters based on kernel distribution functions (most commonly used kernel is Gaussian). It updates the means of the clusters in an iterative fashion using the gradient. The algorithm has similarities with the proposed approach as it tries to estimate the clusters iteratively based on a Gaussian kernel iteratively using the gradient. However, the MeanShift tries to estimate the centers directly as opposed to the proposed method, where data points evolve with an interaction dynamics. The proposed method does not attempt at estimating the cluster centers directly. \\
In our proposed method, we would leverage the dynamics of the multi-agent dynamical system (\ref{sys_dyn}) driven by the interaction potential to identify the cluster centers. We show that when the dynamical system is initialized at the data points, it will converge to the stable equilibrium points corresponding to the centers of clusters. In Fig. \ref{fig_time_series_x1}, the trajectory simulations of the multi-agent dynamical system are presented for the $x_1$ variable, and similar plots could be obtained for $x_2$ variable as well. The results are obtained using exponential potential function in Example \ref{example_gaussian} with $\sigma=0.15$. We notice that the asymptotic dynamics of the particles would eventually decide the four clusters without ambiguity. The stable equilibrium point of the dynamical system would provide us with the cluster centers. In the section \ref{sec_stability}, we will rigorously prove these results.
\section{Particle Clustering Algorithm}\label{sec_algo}
\begin{algorithm}[h!]  
\label{algo}
\caption{PCM Algorithm}
 $x_i(0) = z_i(0)$, $e_{ij} (0) = x_i (0) - x_j (0) $ ,  \\
$S(0) = max(\parallel e_{ij} (0) \parallel$ ) \\
 LOOP : \\
 for ~~$\tau$ ~in ~$0 \dots T$ \\
 for ~~$( i , j )$ ~ in ~ $1 \dots N$ \\
 $ x_{i} (\tau+1) = x_{i} (\tau) + \Delta t \sum_{k} \varphi_{ik}(\tau) \left ( x_{k} (\tau) - x_{i} (\tau)\right) $\\
 end \\
 $S(\tau+1) = \sum_{ij} | e_{ij} (\tau+1) - e_{ij} (\tau) | $ \\
 if ( $S(\tau+1) < N r_*$ ) break \\
 end \\
 Partition the data points into sets s.t. $ e_{ij} (\tau + 1) < r_* $, which would form the clusters. Merge clusters, which have sizes smaller than a threshold to the bigger ones based on the average distance. 
\end{algorithm}
The pseudo-code for PCM algo is presented in Algorithm \ref{algo}. In the algorithm we keep iterating the dynamics. The variable $S(\tau)$ is a dynamic measure of the dispersion. If it goes below a threshold the iteration stops. Finally, the distances are partitioned into groups to identify the clusters. Next, we would make some observation in terms of the computational complexity of the proposed method, and make comparisons with complexities of other algorithms. Finally, we propose a parameter tuning method to reduce the complexity of the algorithm.
\subsection{Complexity and Parameter Tuning} 
The computational complexity of Spectral Clustering is $\mathcal{O}(N^3)$, where $N$ is number of datapoints. Additionally, clustering of the dominant eigen modes would incur additional complexity $\mathcal{O}(r k m)$, where $r$ is the size of reduced dimensional space, $k$ is the number of iterations, and $m$ is the number of clusters . The complexity of K-means algorithm is $\mathcal{O}(N k m)$, where $N$ is the number of datapoints, $k$ is the number of iterations, and $m$ is the number of clusters. It is worth noting that both K-means and Spectral Clustering have stochastic heuristic in them, and as a consequence the performance of individual runs would vary. PCM on the other hand is a deterministic algorithm, and would produce same clusters always, whenever run on the same dataset. The complexity of the PCM algorithm is $\mathcal O(N^2 T)$, where $T$ is the number of iteration of PCM. The main computation of the PCM algorithm comes from updating the interaction potential $\phi_{ij}$, between datapoints $i$ and $j$. In order to reduce the complexity of the algorithm, we can remove those pairs from computation of interaction if $\parallel x_i - x_j \parallel < \epsilon$, we combine those data points, and accordingly the mass of the data point would increase, or in other words the interaction potential emanating from that point should be weighed by number of data point,which are combined. On the other hand, if  $\parallel x_i - x_j \parallel > \mathcal{T}$, where $\mathcal{T}$ is a large number, the pairs $(i,j)$ should be removed from the for loop in the PCM algorithm iterations. This way we can reduce the problem size, and also localize the for loops, which would make the computation sparse. With the aid of these two tricks, the computational complexity can be substantially reduced. \\ Another important aspect of the algorithm is to define the structure of the interaction potential, which is governed by $\sigma$ or $r^*$. It can come from the problem definition, and the prior knowledge about the distance metric can help to ascertain the parameter. In case of automatic tuning of the parameter, the following heuristic provides faster convergence on most cases, $ \sigma =  \frac{s.d.(\parallel e_{ij} \parallel)}{M}$, where $e_{ij} = x_i - x_j$, and $M$ is the dimension of the feature space. If one wishes to use a different potential function other than gaussian, the $r^*$ would be the critical parameter, which would determine the support of the interaction potential. Drawing parallels with the gaussian distribution, $r^*=3 \sigma$, as Gaussian function almost vanishes outside the $3 \sigma$ limits. This parameter tuning scheme is a general guideline, which could be further improved on case by case basis by further fine tuning. We have also chosen the parameter, $\epsilon$ same as that of the $\sigma$.
\section{Convergence Analysis} \label{sec_stability}
In this section, we prove the stability results for the multi-agent dynamical system (\ref{sys_dyn}). The stability results will allow us to connect the clustering problem with the asymptotic dynamics of the multi-agent system. In particular, we will prove that the particles, belonging to a particular cluster, would eventually converge to the mean of the initial positions of the data points, forming that cluster. The individual cluster centers (mean of data points belonging to the cluster) would form the equilibrium point of the multi-particle dynamical system. The proof is comprised of the following steps:
\begin{itemize}
\item First we would outline the assumption on the existence of multiple data clusters in the form of Assumption \ref {cluster_assumption}.
\item Next, in Theorem \ref{Theo1} we show that the dynamics of multi-particle system has the following sturcture of the equilirbrium point : the manifold where particles belonging to a specific cluster have same position, would form the equilibrium points. As a consequence, positions of particles within a cluster equal to mean position of data points belonging to the cluster forms the states of the equilibrium points of the system. 
\item Lemma \ref{single_cluster} proves convergence of the particles based on a Lyapunov function. The particles within a cluster synchronize and we construct the difference system i.e., distance between individual particles, and show that this system is stable by constructing a Lyapunov function. The stability of the difference system proves the convergence of the system.
\item Finally we prove equation \ref{lim_cluster} by using convergence of the particles in a cluster, and also using the fact that mean of the particles in a cluster is immobile.
\end{itemize}
Let the data set $z_1^N:=\{z_1,\ldots,z_N\}$ consists of $P$ clusters denoted by $S_{\ell}, ~ \ell =1,\dots P$. 
\begin{remark}\label{remark_pointlabel}With no loss of generality, we assume 
\[S_\ell=\{z_{N_{\ell-1}+1},\ldots, z_{N_{\ell}}\}.\]
Let $I_\ell$ be the index set corresponding to the data points in cluster $S_\ell$ i.e., 
\[I_\ell=\{N_{\ell-1}+1,\ldots,N_\ell\}.\]
Let the cardinality of the set $I_{\ell}$ be $n_{\ell}$. It is to be noted that, $n_1 = N_1$, and $n_{\ell} = N_{\ell}-N_{\ell-1}$ for $\ell > 1$.
\end{remark}
Essentially, the data points would only be segmented into clusters only when the inter cluster distance would dominate the intra cluster distances. Existence of clusters are stated in terms of the following assumptions on the inter cluster and intra cluster distances between the data points. 
\begin{assumption} \label{cluster_assumption} Let $i\in I_\ell$ be the index set corresponding to the cluster $S_\ell$ for $\ell=1,\ldots, P$. We assume that there exists an $r^*$ such that
\begin{itemize} \label{decoupled_eq}
\item $ \parallel z_{i} - z_{k} \parallel > r_* , ~~\forall ~ k \notin I_{\ell}\; \ \text{and} \;\;\forall\;\; i\in I_\ell$.
\item $ \parallel z_{i} - z_{j} \parallel< r_*, ~~ \forall ~ j \in I_{\ell}.$
\end{itemize}
\end{assumption}
Assumption \ref{cluster_assumption} states that the two data points belonging to two different clusters would always be separated by a minimum distance, where as two data points belonging to a same cluster would always be within a maximum distance of one another. We have following results characterizing the stability of multi-agent system in terms of the properties of the multiple clusters. 
\begin{theorem} \label{Theo1}
The equilibrium points of multi-agent dynamical system (\ref{sys_dyn}) are function of initial condition $x(0)$. Hence, for $x_k(0)=z_k$ for $k=1,\ldots, N$ and according to Remark \ref{remark_pointlabel} and Assumption \ref{cluster_assumption} on the data points, we have following characterization for one of the equilibrium point, $x^*=((x^*_1)^\top,\ldots,(x^*_N)^\top)$ of the system
\[x^*_k=\frac{1}{n_\ell}\sum_{j\in I_\ell} z_j,\;\;\;\;\forall\;\;k\in I_\ell.\]
\end{theorem}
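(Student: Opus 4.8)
The plan is to work with the vector field from (\ref{gradient}) in its pairwise form $\dot x_k=\sum_{\ell=1}^N\varphi(\|x_k-x_\ell\|)(x_\ell-x_k)$ and to exploit two elementary structural facts. First, the $(k,\ell)$ summand is the exact negative of the $(\ell,k)$ summand, so for any index set $J$ that interacts only internally one has $\frac{d}{dt}\sum_{k\in J}x_k=0$. Second, by Assumption \ref{cluster_assumption} the initialization $x_k(0)=z_k$ satisfies $\|x_i(0)-x_k(0)\|>r_*$ whenever $i\in I_\ell$ and $k\notin I_\ell$, so $\varphi$ — hence the corresponding off-block entries of $L(x)$ — vanish across clusters and the system (\ref{sys_dyn}) decouples at $t=0$ into $P$ independent sub-systems, the $\ell$-th acting only on the coordinates indexed by $I_\ell$. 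Provided this decoupling persists (discussed below), the problem reduces to one cluster at a time.

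I would then establish the equilibrium characterization in two steps. (i) Applying the cancellation identity with $J=I_\ell$ gives $\frac{d}{dt}\sum_{k\in I_\ell}x_k=0$, so the per-cluster centroid is conserved and stays equal to its initial value $c_\ell:=\frac1{n_\ell}\sum_{j\in I_\ell}z_j$; this is the sense in which the relevant equilibrium is a function of $x(0)$, because the equilibria a trajectory can approach are constrained by these conserved centroids. (ii) The configuration $x^*$ with $x^*_k=c_\ell$ for every $k\in I_\ell$ is a fixed point of (\ref{sys_dyn}): inside a cluster all mutual distances are zero, so each internal force term is $\varphi(0)\cdot 0=0$, and the cross-cluster force terms vanish because the clusters stay more than $r_*$ apart. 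Conversely, any collapsed equilibrium of the $\ell$-th sub-system (all its coordinates coinciding) must, by the frozen-centroid identity, sit exactly at $c_\ell$; hence $x^*$ in the statement is the equilibrium selected by the data initialization, with coordinates equal to the cluster means.

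The step I expect to be the main obstacle is showing that the block decoupling \emph{persists for all $t\ge0$}, since a priori two particles from different clusters could drift within distance $r_*$ and re-couple the blocks. I would handle this by the consensus-type observation that, as long as the clusters are decoupled, $\dot x_k$ for $k\in I_\ell$ is a nonnegatively weighted combination of the displacements $x_\ell-x_k$ toward the other members of the same cluster; projecting onto an arbitrary direction $e$ shows $\max_{j\in I_\ell}\langle x_j,e\rangle$ is non-increasing and $\min_{j\in I_\ell}\langle x_j,e\rangle$ non-decreasing, so every particle of cluster $\ell$ remains in the convex hull of $\{z_j:j\in I_\ell\}$ for all time. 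Combined with the frozen centroid, this confines cluster $\ell$ to a fixed bounded region about $c_\ell$; one then has to verify that these regions — equivalently the centroids $c_\ell$ — remain $r_*$-separated, which is the quantitative point where I expect the estimates (and possibly a slight strengthening of the separation hypothesis in Assumption \ref{cluster_assumption}) to be required. The actual contraction of each cluster onto the single point $c_\ell$ is then what Lemma \ref{single_cluster} will supply.
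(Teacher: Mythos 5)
Your proposal follows essentially the same route as the paper's proof: use Assumption \ref{cluster_assumption} at the initialization $x_k(0)=z_k$ to kill all cross-cluster interaction terms (since $\varphi(r)=0$ for $r\ge r_*$), observe that any configuration in which the particles of a given cluster coincide annihilates the remaining intra-cluster terms, and identify the collapsed configuration at the cluster means as one such equilibrium. Two remarks. First, a large part of your effort --- persistence of the block decoupling for all $t\ge 0$, invariance of the per-cluster convex hulls, and the conserved centroid --- is not actually needed for this statement, which only asserts that $x^*$ \emph{is} an equilibrium point and therefore reduces to the pointwise verification that the vector field vanishes at $x^*$; the paper defers all of that dynamical content to Theorem \ref{lim_cluster} and Lemma \ref{single_cluster}, and in the paper the tie to the initial condition is simply that $x^*$ is built from the $z_j$. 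Second, you correctly put your finger on a point the paper glosses over: verifying that the vector field vanishes at $x^*$ requires the collapsed centroids $c_\ell$ of distinct clusters to be more than $r_*$ apart, and Assumption \ref{cluster_assumption} only separates the original data points, not their convex hulls, so strictly speaking a mild strengthening of the separation hypothesis (or an extra argument) is needed there. That observation is a genuine improvement on the paper's own write-up rather than a flaw in yours.
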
 
\begin{proof}
We have assumed $x_k(0)=z_k$. This would imply $\parallel x_j (0) - x_k (0) \parallel > r^*$ for all $j \notin I_{\ell}$, and $k \in I_{\ell}$. Let us consider the dynamics of the following sub-state,
\begin{eqnarray*}
\dot x_k & =\sum_{j=1}^N \varphi\left(\parallel x_k-x_j \parallel\right) (x_j -x_k), k \in I_{\ell}, \\
& =\sum_{j \in I_{\ell} } \varphi\left(\parallel x_k-x_j \parallel\right) (x_j -x_k),
\end{eqnarray*}
as for all $j \notin I_{\ell}$, and $k \in I_{\ell}$ accordinng to Assumption \ref{cluster_assumption}, $\parallel x_j - x_k \parallel > r^*$, and from property of the function $\varphi$ we get $ \varphi\left(\parallel x_k-x_j \parallel\right) = 0$. As a consequence, the dynamics of the particles within the cluster is decoupled from the rest of the particles. This can be observed that for the manifold $x_i = x_j, \forall ~ i,j \in I_{\ell}$, we would have $\dot x_k = 0$, and the manifold forms a continuum of equilibrium points. As a special case, the following values for the states $x_k = \frac{1}{n_\ell}\sum_{j \in I_{\ell}} z_{j}$ for all $ k \in I_{\ell}$, we get $\dot x_k = 0$. As the dynamics would be decoupled for each of the sets $I_{\ell}$, such point will be the equilibrium for each of the subsystems. Finally, we can construct the equilibrium of the entire system $x^*$ by stacking the individual equilibriums.
\end{proof}
The following theorem characterizes the local convergence of particles within clusters to these $P$ equilibrium points. We would construct a difference dynamics and show that the nonlinear dynamical system for particles belonging to a cluster would eventually converge i.e., the difference dynamics would be stable. The following theorem summarized the convergence of the particles:
\begin{theorem} \label{lim_cluster}
Let $x_{j}(t)$ be the state of the $j$ agent for the multi-agent dynamical system for $j \in I_{\ell} $. The data points $z_1^N$ satisfies Assumption \ref{cluster_assumption}. If $x_{j}(0)=z_{j}$, then
\[\lim_{t\to \infty} x_{j}(t)=x^\ell_*\]
for $j \in I_{\ell}$ and $\ell=1,\ldots,P$. 
\end{theorem}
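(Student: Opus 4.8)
The plan is to exploit the decoupling established in Theorem \ref{Theo1}: since $\|z_i-z_k\|>r_*$ for $i\in I_\ell$ and $k\notin I_\ell$, and since $\varphi$ vanishes beyond $r_*$, the subsystem on the particles indexed by $I_\ell$ evolves autonomously \emph{as long as} the particles stay close enough. So the first thing I would verify is that the cluster stays ``trapped'': if $\|x_i(t)-x_j(t)\|<r_*$ for all $i,j\in I_\ell$ at time $t$, it remains so for all later times. This follows because the intra-cluster dynamics $\dot x_k=\sum_{j\in I_\ell}\varphi(\|x_k-x_j\|)(x_j-x_k)$ is a (state-dependent) Laplacian consensus flow with nonnegative weights, hence the convex hull of $\{x_j(t):j\in I_\ell\}$ is nonincreasing; in particular the diameter never increases, so the inter-cluster separation $>r_*$ is preserved and the decoupling is maintained for all $t\ge 0$. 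This makes the reduction to a single cluster rigorous, and it is essentially what Lemma \ref{single_cluster} is meant to supply.

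Next I would work entirely inside a fixed cluster $S_\ell$ with $n_\ell$ particles. Define the mean $\bar x(t)=\frac1{n_\ell}\sum_{j\in I_\ell}x_j(t)$. Summing the decoupled dynamics over $k\in I_\ell$, the antisymmetric terms $\varphi(\|x_k-x_j\|)(x_j-x_k)$ cancel pairwise, so $\dot{\bar x}=0$; hence $\bar x(t)\equiv\bar x(0)=\frac1{n_\ell}\sum_{j\in I_\ell}z_j=x_*^\ell$. Then I would pass to the difference (error) variables $e_k=x_k-\bar x$, which satisfy $\sum_{k\in I_\ell}e_k=0$ and inherit the same Laplacian-type dynamics $\dot e_k=\sum_{j\in I_\ell}\varphi(\|e_k-e_j\|)(e_j-e_k)$ (since $x_k-x_j=e_k-e_j$). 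The goal becomes: show $e_k(t)\to 0$ for every $k$, which is exactly $x_k(t)\to x_*^\ell$.

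For the convergence of the error system I would use the Lyapunov function $V(e)=\tfrac12\sum_{k\in I_\ell}\|e_k\|^2$ (equivalently one can use the restriction of $\Phi$ to the cluster, since the whole system is a gradient flow). Differentiating and using the Laplacian symmetry, $\dot V=\sum_{k}e_k^\top\dot e_k=-\tfrac12\sum_{i,j\in I_\ell}\varphi(\|e_i-e_j\|)\|e_i-e_j\|^2\le 0$, because $\varphi\ge 0$. Since within the trapped region every pair satisfies $\|e_i-e_j\|=\|x_i-x_j\|<r_*$, we have $\varphi(\|e_i-e_j\|)>0$ strictly, so $\dot V=0$ forces $e_i=e_j$ for all $i,j\in I_\ell$; combined with $\sum_k e_k=0$ this gives $e_k=0$. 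By LaSalle's invariance principle the trajectory converges to the largest invariant set in $\{\dot V=0\}$, which is the single point $\{e=0\}$. Unstacking, $x_k(t)\to\bar x(0)=\frac1{n_\ell}\sum_{j\in I_\ell}z_j=x_*^\ell$ for all $k\in I_\ell$, and doing this for each $\ell=1,\ldots,P$ (the clusters evolve independently by the trapping argument) yields the theorem.

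The main obstacle is the trapping step — confirming that the diameter of each cluster never exceeds $r_*$ so that the $\varphi=0$ cutoff never activates and the decoupling, together with the strict positivity $\varphi>0$ needed to make $\dot V=0$ bite, holds for all time. The convex-hull/diameter monotonicity of Laplacian consensus dynamics handles it, but one must be a little careful because $\varphi$ depends on the state: the weights stay nonnegative regardless, which is all the monotonicity argument needs. A secondary technical point is invoking LaSalle correctly on the (bounded, hence precompact) trajectory and checking that the only invariant subset of $\{e_i=e_j\ \forall i,j\}\cap\{\sum e_k=0\}$ is the origin, which is immediate.
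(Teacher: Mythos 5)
Your proof is correct and is best read as a tightened version of the paper's own argument: both proceed by decoupling the clusters via Assumption \ref{cluster_assumption} and the cutoff of $\varphi$, running a Lyapunov argument on difference variables, and using invariance of the cluster mean to identify the limit as $\frac{1}{n_\ell}\sum_{j\in I_\ell}z_j$. The differences are worth recording. (i) You center the errors at the mean, $e_k=x_k-\bar x$, and obtain $\dot V=-\tfrac12\sum_{i,j\in I_\ell}\varphi(\|e_i-e_j\|)\,\|e_i-e_j\|^2\le 0$ in one line from the symmetry $\varphi_{ij}=\varphi_{ji}$; the paper instead differences against the reference particle $N_\ell$ and bounds $\dot V_\ell$ term by term, and that chain is not sound as written --- the step replacing $\varphi_{ki}+\varphi_{kN_\ell}$ by $\sqrt{\varphi_{ki}}+\sqrt{\varphi_{kN_\ell}}$ only goes the claimed direction when $e_{ki}^\top e_{kN_\ell}\ge 0$, and the final expression is not the completed square $-\|\sqrt{\varphi_{ki}}\,e_{ki}+\sqrt{\varphi_{kN_\ell}}\,e_{kN_\ell}\|^2$, whose cross term would be $-2\sqrt{\varphi_{ki}\varphi_{kN_\ell}}\,e_{ki}^\top e_{kN_\ell}$. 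Your quadratic-form computation is the correct replacement. (ii) You supply the forward-invariance (trapping) step and the LaSalle step, both of which the paper needs but omits: Lemma \ref{single_cluster} hypothesizes $\|e_{ij}(0)\|\le r_*$ only at $t=0$ yet uses $\varphi>0$ at all later times, and the paper passes from the existence of $V_\ell$ with $\dot V_\ell<0$ directly to $e_{iN_\ell}(t)\to 0$, which is not automatic without an invariance or strict-decay argument. (iii) One caveat on your trapping step: nonincreasing convex hulls do give that intra-cluster distances never exceed their initial maximum $d_0<r_*$, so $\varphi\ge\varphi(d_0)>0$ persists inside each cluster, which is exactly what your LaSalle argument needs; but they do not by themselves give that \emph{inter}-cluster distances stay above $r_*$, since two convex hulls can be closer than the minimum pairwise distance between their generating points. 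Closing this requires reading Assumption \ref{cluster_assumption} as separating the convex hulls of the clusters by more than $r_*$ (a mild strengthening). This residual gap is inherited from --- and present in --- the paper's proof as well; apart from it, your version supplies the correct derivative computation and the two missing lemmas that the paper's argument quietly relies on.
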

\begin{proof}
As a consequence of $x_j(0) = z_j$, we would have $\parallel x_j (0) - x_k (0) \parallel > r^*$ for all $j \in I_{\ell}, k \notin I_{\ell}$. Under this condition and as have seen in the proof of the Theorem \ref{Theo1}, dynamics of the particles would be decoupled within the elements of each $I_{\ell}$. We would prove the result for one such set $I_{\ell}$, and then it can be extended to rest of the clusters. First, we would construct a Lyapunov function to show the stability of particles within a cluster, and then finally reach to the convergence result. Lyapunov function is widely used to prove stability of nonlinear systems \cite{khalil1996nonlinear,sastry2013nonlinear}. We take $N_{\ell}^{th}$ data point as reference and construct the following difference variable,
\[ d_{iN_{\ell}}(t) := e_{i}(t) - e_{N_{\ell}}(t) , ~~ i \in I_{\ell},\]
which leads us to the following equation
\begin{align*}
\dot e_{iN_{\ell}}(t) & := \dot x_{i}(t) - \dot x_{N_{\ell}}(t) , ~~ i \in I_{\ell} , \\
& := \sum_{k \in I_{\ell}} \varphi_{ki} e_{ki} -\sum_{k \in I_{\ell}} \varphi_{kN} e_{k N_{\ell}},
\end{align*}
where, $\varphi_{mn} := \varphi (\parallel x_m - x_n \parallel)$. We could construct the Lyapunov function as
\[ V_{\ell}(t) := \frac{1}{2} \sum_{i \in I_{\ell}}e^T_{i N_{\ell}} (t) e_{i N_{\ell}} (t).\] 
The time derivative of the Lyapunov function would be used to establish the convergence. First, we would assume there is only one cluster in the data set, and successively, we would extend it to cases where there are multiple clusters in the data points w.r.t. the Euclidean norm.
\begin{lemma} \label{single_cluster}
Lyapunov function for the system dynamics satisfies the following conditions, 
\begin{itemize}
\item $V_{\ell} \ge 0, \text{~ and ~} V_{\ell} = 0 \text{ iff } e_{iN_{\ell}} =0, ~\forall~ i \in I_{\ell} .$ 
\item If $ \parallel e_{ij} \parallel (0) \le r_* , ~ \forall i,j $, then $\dot{V}_{\ell}(t) < 0$.
\end{itemize}
\end{lemma} 
\begin{proof}
The first statement regarding the Lyapunov function is satisfied from the definition of the function. Next, we would try to prove the next part.
\begin{align*}
\dot V_{\ell}(t) & = \frac{1}{2} \sum_{i \in I_{\ell}} e^T_{i N_{\ell}} (t) \dot e_{i N_{\ell}} (t)\\
& =\frac{1}{2 ( n_{\ell} -1) } \sum_{i \in I_{\ell}} \left( \sum_{k \in I_{\ell}} (-e_{ki}+e_{k N_{\ell} })\right )^T \dot e_{i N_{\ell}}.
\end{align*}
Now, 
\begin{align*}
& \left( \sum_{k \in I_{\ell}} (-e_{ki}+e_{k N_{\ell} })\right )^T \dot e_{i N_{\ell}} = \\
& \left( \sum_{k \in I_{\ell}} (-e_{ki}+e_{k N_{\ell} })\right )^T \left( \sum_{k \in I_{\ell}} \varphi_{ki} e_{ki} -\sum_{k \in I_{\ell}} \varphi_{k N_{\ell}} e_{k N_{\ell}} \right ) = \\
& \sum_{k \in I_{\ell}} \left( - \varphi_{ki} \parallel e_{ki} \parallel^2 -\varphi_{k N_{\ell}} \parallel e_{kN_{\ell}}\parallel ^2 + (\varphi_{ki}+\varphi_{k N_{\ell}})e_{ki}^T e_{k N_{\ell}}\right) < \\
& \sum_{k \in I_{\ell}} \left( - \varphi_{ki} \parallel e_{ki} \parallel^2 -\varphi_{k N_{\ell}} \parallel e_{k N_{\ell}}\parallel ^2 + (\sqrt \varphi_{ki}+\sqrt \varphi_{k N_{\ell}})e_{ki}^T e_{k N_{\ell}}\right) \\
& = - \sum_{k \in I_{\ell}} \parallel (\sqrt \varphi_{ki} )e_{ki} + \sqrt \varphi_{k N_{\ell}} )e_{k N_{\ell}} \parallel^2 <0.
\end{align*}
Hence, $\dot V_{\ell}(t) < 0$.
\end{proof}
Once we have proved the stability of the particles belonging to cluster $S_{\ell}$, we would next prove the convergence. The existence of Lyaunov function proves convergence of the particles. We would also show that the particles would eventually converge to the mean position of data points which would form the cluster center. Let $\bar x_{\ell} (t) := \frac{1}{n_{\ell}} \sum_{ j \in I_{\ell}} x_j(t)$ and take time derivative as
\begin{align*}
\dot{ \bar x}_{\ell} (t) &= \frac{1}{n_{\ell}} \sum_{ j \in I_{\ell}} \dot x_j(t)\\
&= \frac{1}{n_{\ell}} \sum_{ j, k \in I_{\ell}} \varphi_{jk} (x_k - x_j)\\
&= \frac{1}{n_{\ell}} \sum_{i,j \in I_{\ell},~ j > k} \left ( \varphi_{jk} (x_k - x_j) + \varphi_{jk} (x_j - x_k)\right )\\
& = 0.
\end{align*}
The existence of Lyapunov function $V_{\ell}$ would imply $\underset{t \to \infty} {Lim}~~ e_{iN_{\ell}} (t) = 0$, which means $\underset{t \to \infty} {Lim}~~ x_{i} (t) = \underset{t \to \infty} {Lim}~~ \bar x_{\ell} (t)$. We have already proved $\dot{ \bar x}_{\ell} (t) =0 $. This means that $ \underset{t \to \infty} {Lim}~~ \bar x _{\ell} (t) = \bar x(0) $. This in turn implies, \[\lim_{t\to \infty} x_{j}(t)=x^\ell_* = \frac{1}{n_{\ell}}\sum_{k \in I_{\ell}} z_{k}, ~ j \in I_{\ell}. \] Hence, the proof.
\end{proof}
\begin{remark}
It is to be noted that the Theorem \ref{lim_cluster} provides a sufficiency condition for the convergence. The system might converge even if the condition is violated, as the condition is not necessary for the convergence.
\end{remark}
\section{Particle Graph Clustering} \label{sec_DPGC}
So far we have developed the algorithm, where the data points have real co-ordinates or in other words embedded in real Euclidean space. There are many  scenarios, where the clustering needs to be performed over datasets, where the data points are not embedded in a space with real coordinates. Graph clustering is a typical example, where distance between data points are provided, and the nodes of the graph is to be clustered, based on the distance. This framework can be extended to non-Euclidean metric spaces as well. \\
Let us consider first the clustering problem in Euclidean space as starter, and proceed further from there. We assume exponential interaction potential for this framework.
\begin{align} 
\label{pot_con}
 \phi(\parallel x_k - x_i \parallel ) := - e^{ - \frac{\parallel x_k - x_i \parallel^2 }{\sigma^2}}. 
 \end{align}
 The data points move according to the following equation,
\begin{align*}
\dot x_i(t) & = \sum_{k=1}^N  e^{ - \frac{\parallel x_k (t) - x_i (t) \parallel^2 }{\sigma^2}} \left ( x_k(t) - x_i (t) \right )
\end{align*}
 \begin{figure}
    \centering
    \includegraphics[width=0.55\textwidth]{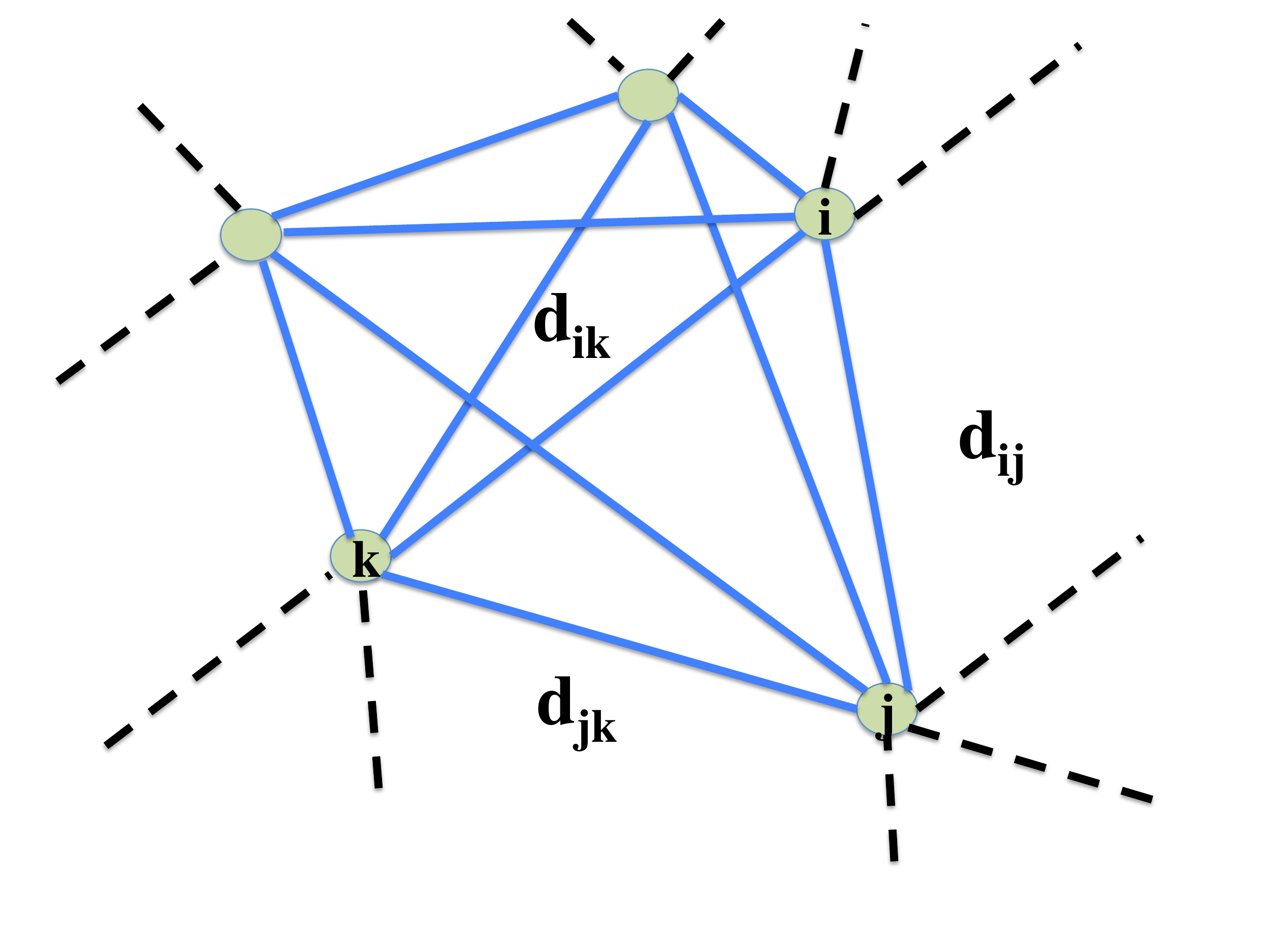}
  \caption{ Graph Clustering Schematic.}
 \label{fig_euclian}
\end{figure}
Next, the evolution of the distance between two data points is observed over time. The difference between the $i^{th}$ and $j^{th}$ ($i \neq j$)data points will be \[ d_{ij}^2(t) := \parallel x_i (t) - x_j (t) \parallel^2 = \left ( x_i(t) - x_j(t) \right )^T \left ( x_i(t) - x_j(t) \right ).\]
Differentiating w.r.t. time,
\begin{align*}
 \dot d_{ij}^2(t) &= 2 \left ( x_i(t) - x_j(t) \right )^T \left ( \dot x_i(t) - \dot x_j(t) \right ), \nonumber \\
 & = 2  \sum_{k=1}^N  e^{ - \frac{\parallel x_k (t) - x_i (t) \parallel^2 }{\sigma^2}}  \left ( x_i(t) - x_j(t) \right )^T  \left ( x_k(t) - x_i (t) \right ) \nonumber \\
 & + 2 \sum_{k=1}^N  e^{ - \frac{\parallel x_k (t) - x_j (t) \parallel^2 }{\sigma^2}}  \left ( x_i(t) - x_j(t) \right )^T  \left ( x_k(t) - x_j (t) \right ).
\end{align*}
Now, 
\begin{align*}
& \parallel x_j - x_k \parallel^2  =  \parallel x_j - x_i \parallel^2 +  \parallel x_i - x_k \parallel^2 + 2 \left ( x_i - x_j \right )^T  \left ( x_k - x_i \right ), \\
& \implies  \left ( x_i - x_j \right )^T  \left ( x_k - x_i \right )  = \frac{d_{jk}^2 - \left ( d_{ij}^2 + d_{ik}^2  \right )}{2}.
\end{align*}
Combining, 
\begin{align*}
& 2 d_{ij} \dot d_{ij}  =   \sum_{k=1}^N   [ e^{ - \frac{d_{ik} ^2 }{\sigma^2}}  \left (  d_{jk}^2 - \left ( d_{ij}^2 + d_{ik}^2  \right ) \right ) \\ &+ e^{ - \frac{d_{jk} ^2 }{\sigma^2}}  \left ( d_{ik}^2 - \left ( d_{ij}^2 + d_{jk}^2  \right ) \right )  ].\nonumber \\
 & \dot d_{ij}  =   \sum_{k=1}^N   [ e^{ - \frac{d_{ik} ^2 }{\sigma^2}}  d_{ik} \left ( \frac{ d_{jk}^2 - \left ( d_{ij}^2 + d_{ik}^2  \right ) }{2 d_{ij} d_{ik}  } \right ) \\ & + e^{ - \frac{d_{jk} ^2 }{\sigma^2}} d_{jk}  \left ( \frac{ d_{ik}^2 - \left ( d_{ij}^2 + d_{jk}^2  \right )}{2 d_{ij} d_{jk} } \right )  ]. \nonumber
\end{align*}
It can be noted, the evolution of the distance has become co-ordinate free. This way we can evolve the distance between data points, and let the edges of the graph evolve. After running the model over a time horizon, we classify the data points into same cluster if the final distance is below a small threshold, similar to what we have done in the PCM algorithm  before. The distance evolution also has a geometric interpretation. The distance evolution can be expressed as,
\begin{align}
\dot d_{ij}  =   \sum_{k=1}^N  \left [ e^{ - \frac{d_{ik} ^2 }{\sigma^2}}  d_{ik}  \cos(\pi - \delta_{jik}) + e^{ - \frac{d_{jk} ^2 }{\sigma^2}} d_{jk}  \cos(\pi - \delta_{ijk}) \right ] . \label{algo_eq}
\end{align}
where, $i \neq j$, 
\begin{align*}
\cos(\pi - \delta_{jik}) &= \frac{ d_{jk}^2 - \left ( d_{ij}^2 + d_{ik}^2  \right ) }{2 d_{ij} d_{ik}  }, \\
\cos(\pi - \delta_{ijk}) &=  \frac{ d_{ik}^2 - \left ( d_{ij}^2 + d_{jk}^2  \right )}{2 d_{ij} d_{jk} }
\end{align*}
$\delta_{jik}$ is the angle formed between line joining $j-i$ and $j-k$ in the triangle shown in Fig. \ref{fig_euclian}. It also can be noted that, $\dot d_{ii}  =0 $, which is also trivially satisfied by substituting $i=j$, and $\delta_{jik} = \delta_{ijk} = \frac{\pi}{2}$. The proposed differential equation of the distance between datapoints would evolve the graph, and make the nodes would converge to the respective clusters. This particular approach is coordinate-free, and the data points do not have to be embedded in any physical space. To run the dynamical system equation one only needs the initial distance between the datapoints.
\section{Simulation Results} \label{sec_sim}
In this section, we would simulate PCM over several data sets, and would draw comparisons with K-means, Spectral Clustering, and MeanShift algorithms.  However, it is to be noted that owing to the stochastic nature of K-means, and Spectral Clustering algorithms the outcomes produced by them might change across multiple runs, and the seeds of the random number generator. As a consequence, we run the K-means and Spectral Clustering $100$ times, and present the best case result for these two algorithms. The proposed algorithm PCM does not suffer from this shortcoming, and the outcomes would remain the same over multiple runs. We have used the statistical programming paradigm 'R' for running the experiments. We have used inbuilt K-means function in R, and for Spectral Clustering have used 'kernlab' R package \cite{kernlab}. For Mean Shift Clustering we have used 'MeanShift' R package \cite{r_meanshift}. For MeanShift the default kernel function and the automatic tuning scheme of the bandwidth (h - parameter) are adopted, which are inbuilt features of the R package. The comparisons are made among the algorithm with the aid of confusion matrices.  Once the confusion matrix is sorted to make it diagonal heavy, the off-diagonal entries capture the instances of erroneous classifications. We compute the sum of the off-diagonal entries under these conditions to estimate the total absolute error in the clustering. It is to be noted the extent of the errors would vary across multiple runs for k-means and spectral clustering. As a consequence we present the minimum, mean and standard deviation of the errors over multiple runs. However, for the PCM and MeanShift the error standard deviations are $0$. The computational times, which are presented cover $100$ iterations for K-Means, Spectral Clustering and for $1$ iteration for PCM and MeanShift, as the former two are stochastic and needed to be run multiple times. The number of clusters are determined automatically by PCM and MeanShift. Whereas, the number of clusters are to be specified for K-Means and Spectral Clustering, and we have provided the number from the prior knowledge. We also present the F-score as a measure of clustering accuracy, which is the harminc mean of precision and recall. All the codes, used to carry out the experiments, could be found in the following repository \cite{gitrepopcm}. 
\subsection{Clusters Generated by Multivariate Gaussian Mixture} 
\begin{figure}[h!]
\centering
\includegraphics[width=0.55 \textwidth]{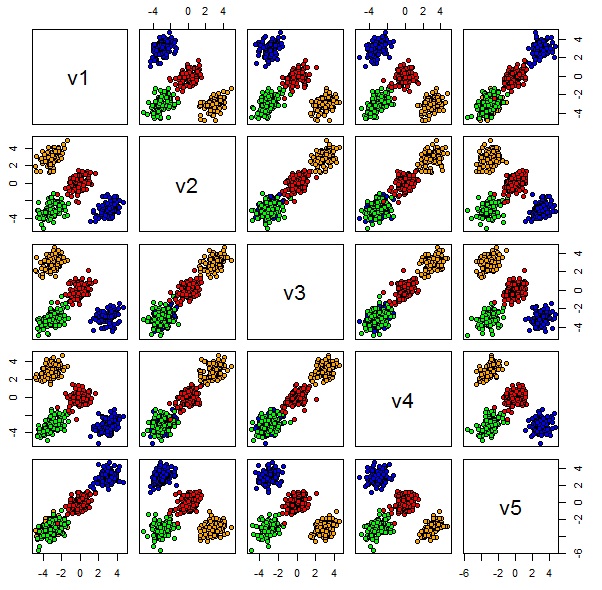}
\caption{ The paired scatter plot of Clusters Generated by Multivariate Gaussian Mixture.}
\label{fig_gaussian_vis}
\end{figure}
We consider a dataset, which is created by mixture of multivariate Gaussian distributions. The feature space has $5$ dimensions as can be observed from Fig. \ref{fig_gaussian_vis}.  There are $400$ samples generated, which are to be clustered. Different colors represent different clusters. The mixture  distribution is comprised of $4$ Gaussian distributions, centered at different locations. The covariance matrix for each of the Gaussian component is comprised of a positive definite matrix. We have used the R package 'MASS' to generate samples from the distributions. For Spectral Clustering and K-Means algorithms the number of cluster input was $4$. 
\begin{table}[h!] 
\caption{Confusion Matrices for Gaussian Mixture} \label{conf_gaussian}
\tiny
\begin{center}
\begin{tabular}[h!]
%{|c|c|c|c|c|c|c|c|c|c|c|c|c}
{|P{0.03cm}P{0.03cm}P{0.03cm}P{0.03cm}|P{0.03cm}P{0.03cm}P{0.03cm}P{0.03cm}|P{0.03cm}P{0.03cm}P{0.03cm}P{0.03cm}| P{0.03cm}P{0.03cm} P{0.03cm}|}
\cline{1-15} 
\multicolumn{4}{|c|}{K-Means}&\multicolumn{4}{|c|}{Spec Clus}&\multicolumn{4}{|c|}{PCM} &\multicolumn{3}{|c|}{MeanShift}\\
 \cline{1-15}
\multicolumn{1}{| P{0.03cm} } {100} & {0} & {0} & {0} & {100} & {0} & {0} & {0} & {100} & {0} & {0} & {0} & {100} & {0} & {0} \\ 
\cline{1-15}
\multicolumn{1}{| P{0.03cm}} {0} & {100} & {0} & {0} &  {0} & {100} &  {0} & {0} & {0} & {100} & {0} & {0} & {0} & {100} & {0} \\ 
\cline{1-15}
\multicolumn{1}{| P{0.03cm} } {0} & {0} & {100} & {0} &  {0} & {0} & {100} & {0} & {0} & {0} & {100} & {0} & {0} & {0} & {100} \\ 
\cline{1-15}
\multicolumn{1}{ |P{0.03cm} } {0} & {0} & {1} & {99} &  {0} & {0} & {1} & {99} & {0} & {0} & {1} & {99}  & {0} & {0} & {100}\\ 
\cline{1-15}
\end{tabular}
\end{center}
\end{table}
The confusion matrices for K-Means, Spectral Clustering, PCM, Meanshift are presented in the Table \ref{conf_gaussian}. It is to be noted that for K-Means and Spectral Clustering the best case confusion matrices are presented. The best case confusion matrices for K-Means and Spectral Clustering are comparable to that of the PCM. However, MeanShift identifies $3$ clusters as opposed to $4$, and combines two modes of the Gaussian into one.
\begin{table}[h!] 
\caption{Time, F-Score, error for Gaussian Mixture} \label{error_gaussian}
\scriptsize
\begin{center}
\begin{tabular}[h!]
{cc|c|c|c|}
\cline{2-5} 
\multicolumn{1}{ c|  }{\multirow{1}{*}{} } & \multicolumn{1}{|c|}{K-Means}  &  \multicolumn{1}{|c|}{Spec Clus} &  \multicolumn{1}{|c|}{PCM} 
 &  \multicolumn{1}{|c|}{Mean Shift}\\
  \cline{1-5}
\multicolumn{1}{ |c|  }{Time (sec) } & 
\multicolumn{1}{ |c| } {16.57} & {40.64} & {24.03} & {9.29} \\ 
 \cline{1-5}
  \multicolumn{1}{ |c|  }{Min Error } & 
\multicolumn{1}{ |c| } {1} & {1} & {1} & {100} \\ 
 \cline{1-5}
 \multicolumn{1}{ |c|  }{Mean Error } & 
\multicolumn{1}{ |c| } {71.6} & {44.79} & {1} & {100} \\ 
 \cline{1-5}
 \multicolumn{1}{ |c|  }{SD Error } & 
\multicolumn{1}{ |c| } {96.8} & {79.14} & {0} & {0} \\ 
 \cline{1-5}
  \multicolumn{1}{ |c|  }{Mean F-Score } & 
\multicolumn{1}{ |c| } {0.34} & {0.37} & {0.40} & {0.30} \\ 
 \cline{1-5}
 \multicolumn{1}{ |c|  }{SD F-Score } & 
\multicolumn{1}{ |c| } {0.09} & {0.07} & {0} & {0} \\ 
 \cline{1-5}
\end{tabular}
\end{center}
\end{table}
Table \ref{error_gaussian} contains the computational times for the four algorithms. The time intervals, which are presented for K-Means and Spectral Clustering, are the total time lapsed in $100$ iterations. Also, it contains the estimated errors in the clustering, which are computed by adding up the off-diagonal entries of the confusion matrices after sorting them to become diagonal heavy. As K-Means, and Spectral Clustering algorithms are stochastic in nature, we compute both the means and the standard deviations of the error in estimation. It can be observed K-Means has the least computational time. The minimum error (best case error) is same for K-Means, Spectral Clustering, and PCM. The mean error is least for PCM, which would remain constant due to the deterministic nature of PCM. As a consequence the standard deviation of error is $0$ for PCM. For MeanShift the Minimum and Mean errors values are the highest. The average F-Score values are highest and lowest for PCM and MeanShift respectively.
\subsection{Hyper-Spherical Clusters}
\begin{figure}[h!]
\centering
\includegraphics[width=0.55 \textwidth]{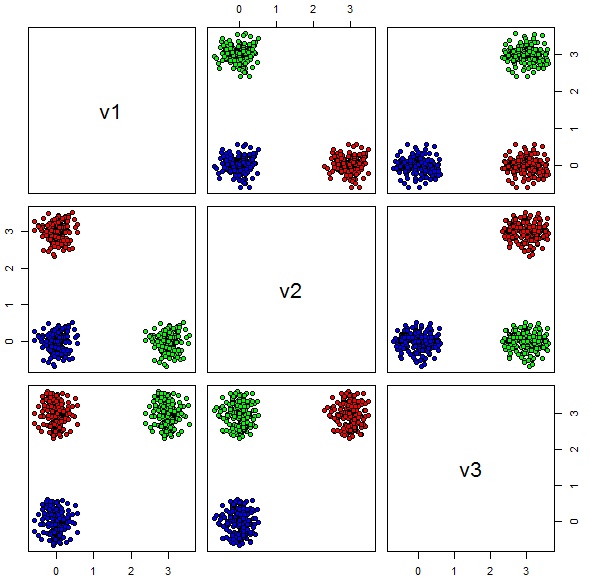}
\caption{The paired scatter plot of Hyper-Spherical clusters.}
\label{fig_shperical_vis}
\end{figure}
Next, we would consider the dataset, which is comprised of $600$ data points, which are distributed across $3$ hyper-spherical clusters in a $3$ dimensional feature space. Figure \ref{fig_shperical_vis} captures the paired scatter plots, where different colors represent different clusters. We have specified number of clusters as $3$ while running K-Means and Spectral Clustering.
\begin{table}[h!] 
\caption{Confusion Matrices for Hyper-Spherical Data set} \label{conf_spherical}
\scriptsize
\begin{center}
\tiny
\begin{tabular}[h!]
{P{0.1cm}P{0.1cm}P{0.1cm}|P{0.1cm}P{0.1cm}P{0.1cm}|P{0.1cm}P{0.1cm}P{0.1cm}| P{0.1cm}P{0.1cm}P{0.1cm}|}
\cline{1-12} 
 \multicolumn{3}{|c|}{K-Means}  &  \multicolumn{3}{|c|}{Spec Clus}
 &  \multicolumn{3}{|c|}{PCM} &  \multicolumn{3}{|c|}{MeanShift}\\
  \cline{1-12}
\multicolumn{1}{ |P{0.1cm} } {200} & {0} & {0} & {200} & {0} & {0} & {200} & {0} & {0}& {200} & {0} & {0} \\ 
\cline{1-12}
\multicolumn{1}{ |P{0.1cm} } {0} & {200} & {0} & {0} &  {200} & {0} &  {0} & {200} & {0} &  {0} & {200} & {0}  \\ 
\cline{1-12}
\multicolumn{1}{ |P{0.1cm} } {0} & {0} & {200} & {0} &  {0} & {200} & {0} & {0} & {200} & {0} & {0} & {200} \\ 
\cline{1-12}
\end{tabular}
\end{center}
\end{table}
Table \ref{conf_spherical} presents the confusion matrices for four algorithms. It needs to be noted that the confusion matrices for K-means and Spectral Clustering are the best case scenarios. The best case confusion matrices are same for K-Means and Spectral Clustering, which in turn is the same as that of PCM, and MeanShift.
\begin{table}[h!] 
\caption{Time, F-Score, error for Hyper-Spherical Data} \label{error_spherical}
\scriptsize
\begin{center}
\begin{tabular}[h!]
{cc|c|c|c|c|}
\cline{2-5} 
\multicolumn{1}{ c|  }{\multirow{1}{*}{} } & \multicolumn{1}{|c|}{K-Means}  &  \multicolumn{1}{|c|}{Spec Clus} &  \multicolumn{1}{|c|}{PCM} &  \multicolumn{1}{|c|}{MeanShift}\\
  \cline{1-5}
\multicolumn{1}{ |c|  }{Time (sec) } & 
\multicolumn{1}{ |c| } {0.84} & {63.98} & {13.99} & {8.11}\\ 
 \cline{1-5}
  \multicolumn{1}{ |c|  }{Min Error } & 
\multicolumn{1}{ |c| } {0} & {0} & {0} & {0} \\ 
 \cline{1-5}
 \multicolumn{1}{ |c|  }{Mean Error } & 
\multicolumn{1}{ |c| } {46.96} & {12.93} & {0} & {0} \\ 
 \cline{1-5}
 \multicolumn{1}{ |c|  }{SD Error } & 
\multicolumn{1}{ |c| } {108.23} & {58.87.59} & {0} & {0} \\ 
 \cline{1-5}
 \multicolumn{1}{ |c|  }{Mean F-Score } & 
\multicolumn{1}{ |c| } {0.46} & {0.47} & {0.5} & {0.5} \\ 
 \cline{1-5}
 \multicolumn{1}{ |c|  }{SD F-Score } & 
\multicolumn{1}{ |c| } {0.09} & {0.07} & {0} & {0}\\ 
 \cline{1-5} 
\end{tabular}
\end{center}
\end{table}
The computational time alongside the estimated errors are presented in the Table \ref{error_spherical}. It is to be noted that the time intervals of K-Means and Spectral Clustering are for $100$ iterations. It can be observed that the computational time is minimal for K-Means. The minimum, mean, and s.d. of error are $0$ in case of PCM and MeanShift. However, K-Means and Spectral Clustering render relatively high mean and s.d. of error due to the inherent stochastic components. The average value of F-Score slightly drops due to the inherent stochasticity in K-Means and Spectral Clustering.
\subsection{Concentric Hyper-Spherical Clusters}
\begin{figure}[h!]
\centering
\includegraphics[width=0.55 \textwidth]{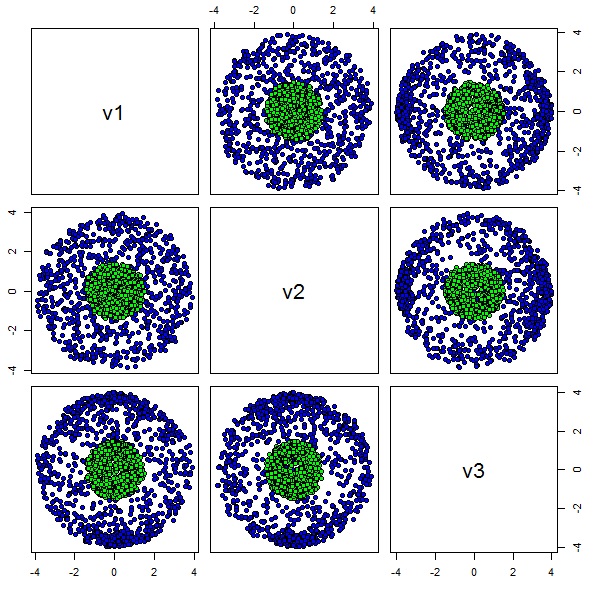}
\caption{The paired scatter plot of Concentric Hyper-Spherical clusters.}
\label{fig_concentric_vis}
\end{figure}
Figure \ref{fig_concentric_vis} shows set of $2000$ data points, which are distributed across two clusters, which are two concentric-hyper spheres in a $3$ dimensional feature space. The points marked by blue are embedded upon the outer sphere, and the green ones are on the inner sphere. 
\begin{table}[h!]
\caption{Confusion Matrices for Concentric Hyper-Spherical Data}  \label{table_conf_concentric}
\scriptsize
\begin{center}
\begin{tabular}[h!] 
{c|c|c|c|c|c|c|c|c|c|}
\cline{1-9} 
 \multicolumn{2}{|c|}{K-Means}  &  \multicolumn{2}{|c|}{Spec Clus}
 &  \multicolumn{2}{|c|}{PCM} &  \multicolumn{3}{|c|}{MeanShift}\\
  \cline{1-9}
\multicolumn{1}{ |c| } {530} & {470} & {1000} & {0} & {1000} & {0} & {0} & {1000} & {0} \\ 
\cline{1-9}
\multicolumn{1}{ |c| } {444} & {556} & {0} & {1000} &  {672} & {328} & {337} & {370} & {293} \\ 
\cline{1-9}
\end{tabular}
\end{center}
\end{table}
The number of clusters provided to K-Means and Spectral Clustering as input was $2$. Table \ref{table_conf_concentric} contains the confusion matrices for the four algorithms. It is to be kept in mind that the confusion matrices for K-Means and Spectral Clustering show the best case scenarios. The best case confusion matrix is superior for Spectral Clustering, which identifies the two clusters accurately. Spectral Clustering identifies the two clusters with complete accuracy in the best case scenario. The best case confusion matrix for K-Means is heavy along the off-diagonal. PCM identifies the inner hyper-sphere accurately as the first cluster. But PCM splits the outer hyper-sphere into two clusters. MeanShift automatically ascertains the number of clusters and finds $3$ clusters in this case. MeanShift keeps the inner hyper-sphere within one cluster, but splits the outer hyper-sphere into three clusters.
\begin{table}[h!] 
\caption{Time, F-Score, error for Concentric Hyper-Spherical Data} \label{error_concentric}
\scriptsize
\begin{center}
\begin{tabular}[h!]
{cc|c|c|c|c|}
\cline{2-5} 
\multicolumn{1}{ c|  }{\multirow{1}{*}{} } & \multicolumn{1}{|c|}{K-Means}  &  \multicolumn{1}{|c|}{Spec Clus}
 &  \multicolumn{1}{|c|}{PCM} &  \multicolumn{1}{|c|}{MeanShift}\\
  \cline{1-5}
\multicolumn{1}{ |c|  }{Time (sec) } & 
\multicolumn{1}{ |c| } {0.75} & {1344.22} & {163.59} & {325.48} \\ 
 \cline{1-5}
  \multicolumn{1}{ |c|  }{Min Error } & 
\multicolumn{1}{ |c| } {914} & {0} & {672} & {663} \\ 
 \cline{1-5}
 \multicolumn{1}{ |c|  }{Mean Error } & 
\multicolumn{1}{ |c| } {914} & {172.67} & {672} & {663} \\ 
 \cline{1-5}
 \multicolumn{1}{ |c|  }{SD Error } & 
\multicolumn{1}{ |c| } {0} & {271.60} & {0} & {0} \\ 
 \cline{1-5}
  \multicolumn{1}{ |c|  }{Mean F-Score } & 
\multicolumn{1}{ |c| } {0.362} & {0.61} & {0.45} & {0.41} \\ 
 \cline{1-5}
 \multicolumn{1}{ |c|  }{SD F-Score } & 
\multicolumn{1}{ |c| } {0} & {0.09} & {0} & {0}\\ 
 \cline{1-5}
\end{tabular}
\end{center}
\end{table}
Table \ref{error_concentric} captures the computational time, F-scores, and errors in estimates for the four algorithms. It is to be noted that the time intervals of K-Means and Spectral Clustering are for $100$ iterations. The minimum error or best case error is the minimum for Spectral Clustering. However, it takes relatively longer computational time for the computation in case of Spectral Clustering. Computational time is minimal for K-Means, but it renders maximum best case error as well. PCM produces higher value of the error in compared to the Spectral Clustering both in terms of mean and best case scenarios. The s.d. is $0$ for K-means, MeanShift, and PCM. Possibly, the K-means algorithm gets trapped in a local minima configuration in all runs for this particular dataset. MeanShift also takes considerable computation time, and produces slightly lesser F-scorein compared to PCM. It can be observed that for this particular dataset the Spectral Clustering clearly outperforms other algorithms in terms of the clustering accuracy. Spectral Clustering is based on the eigen function of the Laplacian matrix and can identify clusters even when the space is Non-Euclidean, which is the case for this particular dataset. However, PCM derives the dynamical equations assuming an Euclidean geometry of the feature space, and thus fails to capture the clusters accurately in this case. To make PCM suitable for Non-Euclidian geometry e.g. Spherical, Hyperbolic spaces, one needs to derive the graph clustering dynamics equations keeping in mind the geometry. The most generalized approach could be to assume the data points are embedded into a Riemannian Manifold, which admits a continuous Riemannian metric (tensor). One needs to use the Gaussian curvature and the tangent space dynamics to attain such a framework. This could be pursued in future research efforts to develop an extension of the PCM, where the data points are embedded into a smooth Riemmanian Manifold. 
\subsection{Edgar Anderson's Iris Data set}
\begin{figure}[h!]
\centering
\includegraphics[width=0.55 \textwidth]{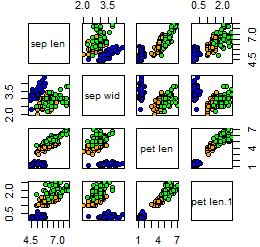}
\caption{ The pair scatter plot of Edgar Anderson's Iris Data.}
\label{fig_IrisData_Pair_Scatter_Plot}
\end{figure}
In this section we would present simulation studies over Edgar Anderson's Iris Data, which has been used extensively as a benchmark clustering data set in the past \cite{becker1988new}. We have used 'iris', which is an inbuilt realization of the data set in R. The data set contains $150$ samples of iris flowers. Each data point has $4$ variables namely sepal length, sepal width, petal length, and petal width. There are three species of iris flowers namely setosa, versicolor, and virginica, which we would use as the a priory labels while computing the confusion matrices. The iris data set with actual species names is depicted in the pair scatter plot of Fig. \ref{fig_IrisData_Pair_Scatter_Plot}, where each species is marked with different colors. \\
\begin{table}[h!] 
\caption{Confusion Matrices for Iris Data set}
\scriptsize
\label{conf_iris}
\begin{center}
\begin{tabular}[h!]
{P{0.1cm}P{0.1cm}P{0.1cm}|P{0.1cm}P{0.1cm}P{0.1cm}|P{0.1cm}P{0.1cm}P{0.1cm}|P{0.1cm}P{0.1cm}|}
\cline{1-11} 
 \multicolumn{3}{|c|}{K-Means}  &  \multicolumn{3}{|c|}{Spec Clus} & \multicolumn{3}{|c|}{PCM} &  \multicolumn{2}{|c|}{MeanShift}\\
  \cline{1-11}
\multicolumn{1}{ |P{0.1cm} } {36} & {0} & {14} & {49} & {0} & {1} & {50} & {0} & {0} & {0} & {50} \\ 
\cline{1-11}
\multicolumn{1}{ |P{0.1cm} } {0} & {50} & {0} & {0} &  {50} & {0} &  {0} & {50} & {0} & {50} & {0} \\ 
\cline{1-11}
\multicolumn{1}{ |P{0.1cm} } {2} & {0} & {48} & {3} &  {0} & {47} & {0} & {15} & {35} & {0} & {50}  \\ 
\cline{1-11}
\end{tabular}
\end{center}
\end{table}
We have provided $3$ as input number of clusters to K-Means and Spectral Clustering. Table \ref{conf_iris} shows the confusion matrices for four algorithms. It is to be noted that for the K-means and Spectral Clustering, the confusion matrices correspond to the best case scenarios. The best case scenario for K-Means and Spectral clustering are comparable in terms of misclassifications. PCM renders higher error w.r.t. these two algorithm's best case scenario. PCM distributes the Versilor instances into two clusters. MeanShift determines automatically the number of clusters to be $2$, and combines Virginia and Versicolor into one cluster. 
\begin{table}[h!] 
\caption{Time, F-Score, error for Iris Data}
\scriptsize
\label{error_iris}
\begin{center}
\begin{tabular}[h!]
{cc|c|c|c|c|}
\cline{2-5} 
\multicolumn{1}{ c|  }{\multirow{1}{*}{} } & \multicolumn{1}{|c|}{K-Means}  &  \multicolumn{1}{|c|}{Spec Clus} 
 &  \multicolumn{1}{|c|}{PCM} &  \multicolumn{1}{|c|}{MeanShift}\\
  \cline{1-5}
\multicolumn{1}{ |c|  }{Time (sec) } & 
\multicolumn{1}{ |c| } {0.67} & {16.55} & {5.67} & {1.83}\\ 
 \cline{1-5}
  \multicolumn{1}{ |c|  }{Min Error } & 
\multicolumn{1}{ |c| } {16} & {4} & {15} & {50} \\ 
 \cline{1-5}
 \multicolumn{1}{ |c|  }{Mean Error } & 
\multicolumn{1}{ |c| } {25.87} & {24.81} & {15} & {50} \\ 
 \cline{1-5}
 \multicolumn{1}{ |c|  }{SD Error } & 
\multicolumn{1}{ |c| } {22.22} & {21.09} & {0} & {0} \\ 
 \cline{1-5}
  \multicolumn{1}{ |c|  }{Mean F-Score } & 
\multicolumn{1}{ |c| } {0.41} & {0.42} & {0.45} & {0.33} \\ 
 \cline{1-5}
 \multicolumn{1}{ |c|  }{SD F-Score } & 
\multicolumn{1}{ |c| } {0.07} & {0.07} & {0} & {0} \\ 
 \cline{1-5}
\end{tabular}
\end{center}
\end{table}
Table \ref{error_iris} shows that the computational time is lowest for K-means and maximum for Spectral Clustering. The time intervals of K-Means and Spectral Clustering are for $100$ iterations. The best case error or minimum error in cluster estimation is minimum for Spectral Clustering and maximum for MeanShift. However, the mean and standard deviation are lowest for MeanShift, and PCM. The mean F-Score value is highest for PCM and lowest for MeanShift.
\subsection{Pima Indian Diabetes Data Set}
\begin{figure}[h!]
\centering
\includegraphics[width=0.55 \textwidth]{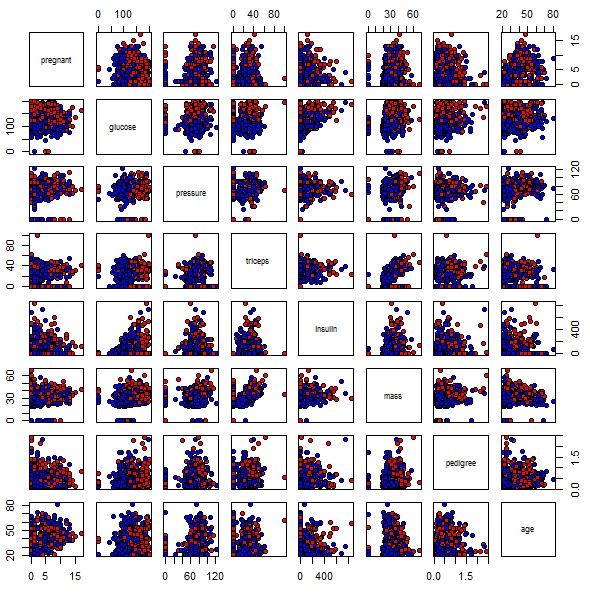}
\caption{Pima Indian Data Set.}
\label{fig_pimaindian}
\end{figure}
We would consider simulation studies on the Pima Indian Diabetes Data set. The data set is comprised of medical test results for Pima Indian patients, who were at least $21$ years of age \cite{smith1988using}. We have curated the data from the R-package 'mlbench' \cite{mlbench}. There are $768$ instances in the data set. The set of features is comprised of $8$ real measurements. Figure \ref{fig_pimaindian} shows the pairwise scatter plot of the Pima Indian Diabetes data set with positive and negative samples marked as red and green respectively.
\begin{table}[h!] 
\caption{Confusion Matrices for Pima Indian Data}
\tiny
\label{conf_pima_indian}
\begin{center}
\begin{tabular}[h!]
{|P{0.1cm} P{0.1cm}|P{0.1cm} P{0.1cm}|P{0.1cm} P{0.1cm}|P{0.1cm} P{0.1cm}P{0.2cm} P{0.1cm}P{0.1cm} P{0.1cm} P{0.1cm}P{0.1cm} P{0.1cm}|}
\cline{1-15} 
 \multicolumn{2}{|c|}{K-Means}&\multicolumn{2}{|c|}{Spec Clus}&\multicolumn{2}{|c|}{PCM}&\multicolumn{9}{|c|}{MeanShift}\\
\cline{1-15}
\multicolumn{1}{ |P{0.1cm} } {421} & {79} & {273} & {227} & {253} & {247} & {301       } &  {169} &  {4} &    {0}   &       {21}    &      { 3} & {1} & {1} & {0} \\ 
\cline{1-15}
\multicolumn{1}{ |P{0.1cm} } {182} & {86} & {114} & {154} &  {141} & {127} &  {141} &  {93} &   {11} &  {1} &    {19} &     {2} & {0} & {0} & {1} \\ 
\cline{1-15}
\end{tabular}
\end{center}
\end{table}
Table \ref{conf_pima_indian} shows the confusion matrices for four algorithms. For K-means and Spectral Clustering the confusion matrices are for the best case scenarios. The best case scenario confusion matrix points to highest accuracy for K-Means. The number of clusters, which is provided as an input to K-Means and Spectral Clustering, has been made equal to $2$ for these two algorithms. The MeanShift algorithm determines the number of clusters automatically, and in this particular case has identified $9$ clusters. As a consequence, the number of columns in the confusion matrix is $9$ for MeanShift.
\begin{table}[h!] 
\caption{Time, F-Score, error for Pima Indian Diabetes Data}
\scriptsize
\label{error_pima_indian}
\begin{center}
\begin{tabular}[h!]
{cc|c|c|c|}
\cline{2-5} 
\multicolumn{1}{ c|  }{\multirow{1}{*}{} } & \multicolumn{1}{|c|}{K-Means}  &  \multicolumn{1}{|c|}{Spec Clus} &  \multicolumn{1}{|c|}{PCM}  &  \multicolumn{1}{|c|}{MeanShift}\\
  \cline{1-5}
\multicolumn{1}{ |c|  }{Time (sec) } & 
\multicolumn{1}{ |c| } {0.77} & {229.13} & {25.03} & {29.92}\\ 
 \cline{1-5}
  \multicolumn{1}{ |c|  }{Min Error } & 
\multicolumn{1}{ |c| } {261} & {341} & {388} & {374}\\ 
 \cline{1-5}
 \multicolumn{1}{ |c|  }{Mean Error } & 
\multicolumn{1}{ |c| } {401.22} & {381.74} & {388} & {374} \\ 
 \cline{1-5}
 \multicolumn{1}{ |c|  }{SD Error } & 
\multicolumn{1}{ |c| } {122.40} & {11.19} & {0} & {0} \\ 
 \cline{1-5}
  \multicolumn{1}{ |c|  }{Mean F-Score } & 
\multicolumn{1}{ |c| } {0.34} & {0.33} & {0.32} & {0.33} \\ 
 \cline{1-5}
 \multicolumn{1}{ |c|  }{SD F-Score } & 
\multicolumn{1}{ |c| } {0.08} & {0.01} & {0} & {0} \\ 
 \cline{1-5}
\end{tabular}
\end{center}
\end{table}
Table\ref{error_pima_indian} provides the computational time, F-Score, error values for the four algorithms. It is to be noted that the time intervals for K-Means and Spectral Clustering show the total elapsed time for $100$ iterations. The best case error and mean error are lowest for K-means, and highest for PCM. For Spectral Clustering the computational time maximum. For PCM the error s.d. is least, whereas the same is highest for the K-Means. The F-Score is also highest in this case for K-Means.
\section{Conclusion} \label{sec_con}
In this paper, we have proposed a deterministic algorithm based on dynamical system theory for identifying the cluster centers in large data sets. Cluster centers are recovered as the asymptotically stable fixed points of the multi-agent gradient dynamical system, when the system is initialized at the initial conditions corresponding to the locations of data points. Lyapunov based convergence proof is provided for the clustering algorithm. The proposed approach is computationally tractable for larger data sets, as it does not involve eigen-decomposition type computation of a large matrix. At the same time, the algorithm does not suffer from possible variations in outcomes over multiple runs similar to heuristic algorithms. The algorithm automatically determines the number of cluster from the convergence behavior. Simulation results are presented over several data sets, and comparisons are made between existing techniques, and the proposed algorithm. 
\bibliography{ref_svm_1}
\bibliographystyle{plain}
\end{document}